\newtheorem{theorem}{Theorem}
\newtheorem{lemma}{Lemma}
\newtheorem{definition}{Definition}
\newtheorem{example}{Example}
\tikzset{
  >=stealth,
  box state/.style={draw,rectangle,minimum size=8mm},
  prob state/.style={draw,very thick,shape=circle,darkblue,minimum size=3mm,inner sep=0mm},
  node distance=2cm,on grid,auto, initial text=,
  every loop/.style={shorten >=0pt},
  accepting/.style={double distance=1.2pt, outer sep = 0.6pt+\pgflinewidth},
  accepting dot/.style={above=-2.5pt,circle,fill,darkgreen,inner sep=2pt,radius=1pt},
  loop above/.append style={every loop/.append style={out=120, in=60, looseness=6}},
  loop below/.append style={every loop/.append style={out=300, in=240, looseness=6}},
  loop left/.append style={every loop/.append style={out=210, in=150, looseness=6}},
  loop right/.append style={every loop/.append style={out=30, in=330, looseness=6}},
  accepting arc/.style={dashed},
  marked/.style={
    dashed,
    opacity=0.3
  },
  marked on/.style={alt=#1{marked}{}},
}
\newcommand{\F}{\mathbf{F}}
\newcommand{\G}{\mathbf{G}}
\newcommand{\U}{\mathbf{U}}
\newcommand{\X}{\mathbf{X}}
\newcommand{\M}{\mathcal{M}}
\newcommand{\Mhat}{\widehat{\mathcal{M}}}
\tikzstyle{cs-robot-field-regular}=[inner sep=0pt,minimum width=1.0cm,minimum height=1.0cm,draw]
\tikzstyle{cs-robot-field-charge}=[inner sep=0pt,minimum width=1.0cm,minimum height=1.0cm,draw,fill=green!20!white]
\tikzstyle{cs-robot-field-danger}=[inner sep=0pt,minimum width=1.0cm,minimum height=1.0cm,draw,fill=red!20!white]
\newcommand{\csRobReg}[3]{\node[cs-robot-field-regular] at (#1,#2) {\huge\textbf{#3}};}
\newcommand{\csRobDan}[3]{\node[cs-robot-field-danger] at (#1,#2) {\huge\textbf{#3}};}
\title{A PAC Learning Algorithm for LTL and Omega-Regular Objectives in MDPs}
\author {
    Mateo Perez,
    Fabio Somenzi,
    Ashutosh Trivedi
}
\begin{document}

\maketitle

\begin{abstract}
    Linear temporal logic (LTL) and $\omega$-regular objectives---a superset of LTL---have seen recent use as a way to express non-Markovian objectives in reinforcement learning. We introduce a model-based probably approximately correct (PAC) learning algorithm for $\omega$-regular objectives in Markov decision processes (MDPs). As part of the development of our algorithm, we introduce the $\varepsilon$-recurrence time: a measure of the speed at which a policy converges to the satisfaction of the $\omega$-regular objective in the limit. We prove that our algorithm only requires a polynomial number of samples in the relevant parameters, and perform experiments which confirm our theory.
\end{abstract}

\insert\footins{\phantom{A}\\\phantom{A}}

\section{Introduction}
Reinforcement learning (RL)~\cite{sutton2018reinforcement} is a sampling-based approach to learning a controller. Inspired by models of animal behavior, the RL agent interacts with the environment and receives feedback on its performance in terms of a numerical reward, that either reinforces or punishes certain behaviors. This learning approach has produced impressive results in recent years~\cite{mnih2015human, Silver16}. 
However, failure to precisely capture designer's intent in reward signals can lead to the agent learning unintended behavior~\cite{amodei2016concrete}. 
As a response, formal languages---in particular linear temporal logic (LTL) and $\omega$-regular languages---have been proposed to unambiguously capture learning objectives.
While these languages have enjoyed practical success~\cite{ hahn2019omega,bozkurt2020control}, their theoretical complexity is relatively underexplored.
In this paper we propose and study a model-based probably approximately correct RL algorithm for LTL and $\omega$-regular languages.

Probably approximately correct (PAC) learning~\cite{valiant1984theory} is a framework for formalizing guarantees of a learning algorithm: a user selects two parameters, $\varepsilon > 0$ and $\delta > 0$. A learning algorithm is then (efficient) PAC if it returns a solution that is $\varepsilon$ close to optimal with probability at least $1-\delta$ using a polynomial number of samples. In RL, many PAC learning algorithms have been proposed for both discounted and average reward~\cite{kakade2003sample, rmax}. These algorithms usually provide sample bounds in terms of the sizes of the state and action spaces of the Markov decision process (MDP) that describes the environment. Finite-horizon and discounted reward both have the property that small changes to the transition probabilities result in small changes to the value of the objective. This means that the sample complexity is independent of the transition probabilities of the MDP. However, infinite-horizon, undiscounted objectives, like average reward and the satisfaction of LTL properties, are sensitive to small changes in probabilities, and their sample complexity is dependent on some knowledge of the transition probabilities.  Hence, if only the number of state/action pairs is allowed, alongside $1/\varepsilon$ and $1/\delta$, as parameters, creating a PAC learning algorithm for undiscounted, infinite-horizon properties is not possible. Specifically for LTL, this has been observed by \citet{yang2021tractability} and \citet{alur2022framework}.  

\begin{example}[Intractability of LTL]
Figure~\ref{fig:robust} is an example adopted from \cite{alur2022framework} that shows the number of samples required to learn safety properties is dependent on some property of the transition structure. The objective in this example is to stay in the initial state $s_0$ forever. This can be specified with average reward (a reward of $1$ in $s_0$ and $0$ otherwise) and in LTL ($\varphi = \G s_0$). The transition from $s_0$ to $s_1$ under action $b$ must be observed in order to distinguish action $a$ from action $b$ and produce an $\varepsilon$-optimal policy for any $\varepsilon < 1$. The number of samples required to see this transition with high probability is affected by the value of $p$. Smaller values of $p$ means it takes longer for a policy's finite behavior to match its infinite behavior.

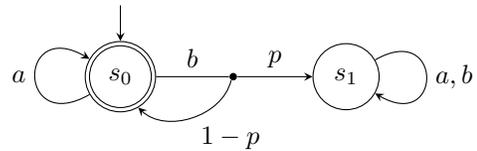
\begin{figure}[b]
    \centering
    \begin{tikzpicture}
        \node[state, initial above, accepting] (s0) {$s_0$};
        \node[circle, inner sep=1pt, fill=black] [right=1.5cm of s0] (p) {};
        \node[state] [right=1.5cm of p] (s1) {$s_1$};
        \path[-]
        (s0) edge node {$b$} (p)
        ;
        \path[->]
        (s0) edge [loop left] node {$a$} ()
        (p) edge [bend left=60] node {$1-p$} (s0)
        (p) edge node {$p$} (s1)
        (s1) edge [loop right] node {$a,b$} ()
        ;
    \end{tikzpicture}
    \caption{Example adopted from \cite{alur2022framework}. The objective is to remain in $s_0$ forever.}
    \label{fig:robust}
\end{figure}
\end{example}

This non-PAC-learnability may motivate using discounted versions of LTL \cite{littman2017environment,Alur23}, which, however, have significantly different semantics from the undiscounted logic.
One may argue instead that the complexity of the dynamics of an MDP is not entirely captured by the number of state-action pairs.  For example, for average reward, \citet{kearns2002near} use the $\varepsilon$-return mixing time, a measure of how fast the average reward is achieved in a particular system, for this purpose. They argue that in order to know the learning speed of an algorithm, one must know the speed at which the policy achieves the limit average reward. The R-MAX algorithm of \citet{rmax} also utilizes the $\varepsilon$-return mixing time.

The $\varepsilon$-return mixing time is defined based off of a given reward function, which we do not have in our context.
Therefore, we require an alternative notion. We propose the $\varepsilon$-recurrence time as a way to reason about the speed at which an $\omega$-regular objective is achieved.  
Informally, the $\varepsilon$-recurrence time is the expected time for a set of recurring states to be visited twice.
In Figure~\!\ref{fig:robust}, the $\varepsilon$-recurrence time increases when $p$ decreases.
We will show that this additional parameter is sufficient for defining a PAC algorithm for $\omega$-regular objectives.

\paragraph{Contributions.} We introduce a model-based PAC learning algorithm for LTL and $\omega$-regular objectives in Markov decision processes. For our algorithm, we introduce the $\varepsilon$-recurrence time: a measure of the speed at which a policy converges to the satisfaction of the $\omega$-regular objective in the limit. We show that the number of samples required by our algorithm is polynomial in the relevant input parameters. Our algorithm only requires the ability to sample trajectories of the system, and does not require prior knowledge of the exact graph structure of the MDP. 
Finally, we demonstrate the practicality of our algorithm on a set of case studies. 

\paragraph{Related work.} A PAC learning algorithm for LTL was introduced by~\citet{fu2014probably} that uses sampled trajectories to learn, but requires knowledge of the graph structure of the MDP, i.e., which transitions occur with nonzero probability. 
\citet{brazdil2014verification} propose an algorithm with PAC guarantees for unbounded reachability by using the minimum nonzero transition probability, and describe how to extend their method to LTL. 
\citet{ashok2019pac} utilize the minimum nonzero transition probability to develop an anytime statistical model-checking algorithm for unbounded reachability. Although they do not discuss it, in principle their method can extended to LTL similarly.
\citet{voloshin2022policy} provide an algorithm with PAC guarantees for LTL which assumes access to a generative model of the system.

\citet{daca2017faster} describe a PAC algorithm capable of checking the satisfaction of LTL on a Markov chain. They observe that ``some information about the Markov chain is necessary for providing statistical guarantees.'' 
The aforementioned works of~\citet{alur2022framework} and \citet{yang2021tractability} formalize this observation.

\citet{Alur23} study model-free RL for \emph{discounted} LTL, while we do not assume discounting.  \citet{HahnAtva22Impossibility} show that Rabin automata are unsuitable for model-free RL of $\omega$-regular objectives.  We can use Rabin automata because our algorithm is model-based.

\section{Preliminaries}
A Markov decision process (MDP) is a tuple $\M = (S, A, P, s_0)$ where $S$ is the set of states, $A$ is the set of actions, $P: S \times A \times S \to [0,1]$ is the transition function, and $s_0 \in S$ is the initial state. A run of an MDP is an infinite sequence $s_0, a_0, s_1, a_1, \ldots$ such that $P(s_i, a_i, s_{i+1}) > 0$ for all $i \ge 0$. A Markov chain $M = (S, P, s_0)$ is an MDP where the set of actions is singleton, i.e. $S$ is the set of states, $P: S \times S \to [0,1]$ is the transition function, and $s_0 \in S$ is the initial state. A bottom strongly connected component (BSCC) of a Markov chain is a bottom strongly connected component of the graph formed by the positive probability edges of the Markov chain. Equivalently, a BSCC of a Markov chain is a set of states $B \subseteq S$ where for all $s, s' \in B$ the probability of reaching $s'$ from $s$ is positive and the probability of reaching a state $s'' \in S\ \backslash\ B$ is zero. 
A policy is a recipe for selecting actions. A policy is positional if it is of the form $\pi : S \to A$. A policy $\pi$ induces a probability distribution over runs. We denote the probabilities under this distribution by $\Pr_\pi^\M(\cdot)$.

Let $AP$ be the set of atomic propositions. An LTL formula has the following grammar
$$\varphi := \top \mid b \in AP \mid \neg \varphi \mid \varphi \lor \varphi \mid \X \varphi \mid \varphi \U \varphi$$

We write $\bot := \neg \top$, $\varphi \land \varphi := \neg (\neg\varphi \lor \neg\varphi)$, the \emph{finally} operator as $\F \varphi := \top \U \varphi$, and the \emph{globally} operator $\G \varphi := \neg \F \neg \varphi$. For a formula $\varphi$ and an infinite word $\sigma = \sigma_0 \sigma_1 \ldots \in (2^{AP})^\omega$ we write $w \vDash \varphi$ to denote that $\sigma$ satisfies $\varphi$. We write $\sigma_{i:\infty} = \sigma_{i} \sigma_{i+1} \ldots$ for the substring of $\sigma$ starting at position $i$. The semantics of LTL are defined as
\[
\begin{aligned}
    &\sigma \vDash \top && \\
    &\sigma \vDash a && \text{iff\ } a \in \sigma_0 \\
    &\sigma \vDash \neg \varphi && \text{iff\ } \sigma \nvDash \varphi \\
    &\sigma \vDash \varphi_1 \lor \varphi_2 && \text{iff\ } \sigma \vDash \varphi_1 \text{\ or\ } \sigma \vDash \varphi_2 \\
    &\sigma \vDash \X \varphi && \text{iff\ } \sigma \vDash \sigma_{1:\infty} \\
    &\sigma \vDash \varphi_1 \U \varphi_2 && \text{iff\ } \exists j \ge 0 \text{\ s.t.\ } \sigma_{j:\infty} \vDash \varphi_2 \\
    & &&\qquad\text{and\ } \sigma_{i:\infty} \vDash \varphi_1, \forall\ 0 \le i < j\enspace.
\end{aligned}
\]

Omega-regular languages are a generalization of regular languages for infinite strings. Like regular languages are accepted by finite automata, $\omega$-regular languages are accepted by $\omega$-automata. An $\omega$-automaton is a tuple $\mathcal{A} = (Q, \Sigma, \delta, q_0, F)$ where $Q$ is a finite set of states, $\Sigma$ is the input alphabet, $\delta: Q \times \Sigma \to 2^Q$ is the (nondeterministic) transition function, $q_0 \in Q$ is an initial state, and $F$ is an acceptance condition over states. The B\"uchi acceptance condition is $F \subseteq Q$, a subset of accepting states. A B\"uchi automaton accepts an infinite word $\sigma$ if there exists a run in $\mathcal{A}$ that visits accepting states infinitely often. We denote the acceptance of an infinite word $\sigma$ by $\mathcal{A}$ as $\sigma \vDash \mathcal{A}$. It is well known that LTL expresses a subset of the $\omega$-regular languages. There exists many translations from LTL to $\omega$-automata~\cite{spot}.

Let $\M = (S, A, P, s_0, AP, L)$ be an MDP equipped with atomic propositions $AP$ and a labeling function $L: S \to 2^{AP}$, and let $\mathcal{A} = (Q, \Sigma, \delta, q_0, F)$ be an $\omega$-automaton. The probability of satisfaction of $\mathcal{A}$ under a policy $\pi$ in $\M$ is $p^\M_\pi = \Pr_{\pi}^\M(L(s_0)L(s_1)\ldots \vDash \mathcal{A})$. The optimal probability of satisfaction is $p^\M = \sup_{\pi} p^\M_\pi$. If $p^\M_\pi = p^\M$ then we say that $\pi$ is optimal. For some $\varepsilon > 0$, if $p_\pi^\M \ge p^\M - \varepsilon$ then $\pi$ is $\varepsilon$-optimal.

One can form the product MDP $\M^\times = (S^\times, A^\times, P^\times, (s_0, q_0), F)$ by synchronizing the states of $\M$ and $\mathcal{A}$, i.e. $S^\times = S \times Q$, $A^\times = A \times Q$, and $P^\times((s,q), (a,q), (s',q')) = P(s,a,s')$ if $q' \in \delta(q, L(s))$ and $P^\times((s,q), (a,q), (s',q')) = 0$ otherwise. Note that we can form the product on-the-fly, i.e., we can sample trajectories from $\M^\times$ by simply updating the states of $\M$ and $\mathcal{A}$ separately, and concatenating them at the end. Note that the nondeterminism in the automaton is resolved as actions in the product. If $\mathcal{A}$ is good-for-MDPs (GFM)~\cite{hahn2020good}, then the optimal probability of satisfaction in the product $\M^\times$ is the same as the optimal probability of satisfaction of $\mathcal{A}$ in $\M$. Deterministic $\omega$-regular automata are always GFM. One can lift the policy computed in $\M^\times$ to a memoryful policy in $\M$ that uses $\mathcal{A}$ as memory. We note that for the popular acceptance conditions B\"uchi, parity, and Rabin, computing optimal policies can be done in polynomial time in the size of $\M^\times$~.
We refer the reader to~\cite{baier2008principles} for further details. 

\section{Main Results}
For generality, we examine the setting in which we are given an MDP $\M = (S, A, P, s_0, F)$ equipped with an acceptance condition $F$ on states. To avoid unneeded complexity, we assume that the acceptance condition $F$ is such that there are positional optimal policies. This is true for the popular acceptance conditions B\"uchi, parity, and Rabin.\footnote{The results that follow can extended to a Muller condition by replacing instances of ``positional'' with ``deterministic finite memory with memory $N$'' where $N$ is a property dependent constant.}

Recall that if one begins with an MDP $\M = (S, A, P, s_0, AP, L)$ equipped with atomic propositions $AP$ and a labeling function $L: S \to 2^{AP}$, and a suitable GFM $\omega$-automaton---which may have been constructed from an LTL formula---then one can form the product MDP. The policy produced on the product MDP can be lifted to a memoryful strategy on the original MDP with the same guarantees. Additionally, recall that the product MDP construction can be done on-the-fly for unknown MDPs. Thus, our problem formulation is general enough to capture producing policies for LTL and $\omega$-regular objectives on MDPs with its states labeled by atomic propositions.

We begin by defining the $\varepsilon$-recurrence time in Markov chains.

\begin{definition}
    \label{def:rtime_MCs}
    The $\varepsilon$-recurrence time in a Markov chain $M = (S, P, s_0)$ is the smallest time $T$ such that with probability at least $1-\varepsilon$ a trajectory of length $T$ starting from the initial state $s_0$ visits every state in some BSCC twice.
\end{definition}

Intuitively, the $\varepsilon$-recurrence time $T$ is the time needed so that the recurrent behavior of a trajectory of length $T$ matches the recurrent behavior of an infinite extension of that trajectory with probability at least $1-\varepsilon$.
It may be that the $\varepsilon$-recurrence time is unknown, but other parameters, like the minimum positive transition probability $p_{\min}$ are known. In such a case, the following lemma provides an upper bound.
\begin{lemma}
    \label{lem:upper-bound}
    Let $M = (S, P, s_0)$ be a Markov chain and $p_{\min} = \min_{\{s,s' | P(s,s') > 0\}} P(s,s')$ be the minimum positive transition probability in $M$. Then the $\varepsilon$-recurrence time $T$ satisfies $T \le 2|S| \frac{\log(\varepsilon/2)}{\log(1-p_{\min}^{|S|})}$ .
\end{lemma}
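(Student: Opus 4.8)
The plan is to reduce the recurrence event to a sequence of at most $2|S|$ elementary reachability sub-goals and to control each sub-goal by a geometric tail governed by the per-window success probability $q := p_{\min}^{|S|}$. The first ingredient I would establish is the reachability fact that drives everything: if $s'$ is reachable from $s$ in the graph of positive-probability edges, then the probability that the chain visits $s'$ within $|S|$ steps, started at $s$, is at least $p_{\min}^{|S|}$. This holds because reachability yields a simple path from $s$ to $s'$ using at most $|S|-1$ edges, and the probability that the chain follows this exact path is at least $p_{\min}^{|S|-1} \ge p_{\min}^{|S|}$, since every edge on it has probability at least $p_{\min}$.

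Next I would decompose the event ``a trajectory visits every state of some BSCC twice'' as: (i) enter some BSCC, and then (ii) hit, in order, a fixed target sequence $b_1,\dots,b_k,b_1,\dots,b_k$ that lists the $k \le |S|$ states of that BSCC twice; hitting this sequence in order forces each state to be visited at least twice. Two structural facts make the decomposition legitimate. From every state some BSCC is reachable, so phase (i) can never become impossible even if the chain wanders, and we may always retarget a nearest reachable BSCC state. A BSCC is closed, so once it is entered every target of (ii) remains reachable for the rest of the run. Folding the entry into the first visit, this is a sequence of at most $2|S|$ reachability sub-goals, which is the origin of the factor $2|S|$.

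For a single sub-goal I would partition time into windows of $|S|$ steps; by the reachability fact together with the strong Markov property, each window completes the current (still-reachable) target with probability at least $q$, so the probability that the sub-goal is unmet after $n$ windows is at most $(1-q)^n$. Choosing $n \ge \log(\varepsilon/2)/\log(1-q)$ makes this at most $\varepsilon/2$, and combining the $\le 2|S|$ sub-goals should yield a bound of exactly the claimed form $T \le 2|S|\cdot \log(\varepsilon/2)/\log(1-p_{\min}^{|S|})$, after which the definition of the $\varepsilon$-recurrence time gives the lemma.

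The delicate step is the combination in the previous paragraph. A crude union bound over the $2|S|$ sub-goals is too lossy: it both pays an extra factor of $|S|$ for the window length and degrades $\varepsilon/2$ to $\varepsilon/(2|S|)$. Recovering the stated closed form instead requires charging windows to sub-goals more economically, essentially viewing the whole trajectory as a single stream of length-$|S|$ trials in which $2|S|$ successes must accrue, while simultaneously maintaining the reachability invariant across the boundaries between consecutive sub-goals. Justifying that these windows may be treated as independent trials each succeeding with probability at least $q$, despite the conditioning introduced by the strong Markov property and by the changing target, is where I expect the main work to lie.
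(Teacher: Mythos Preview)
Your reachability-in-windows framework is sound, but the decomposition into $2|S|$ single-state sub-goals is where your plan diverges from the paper and where it runs into real trouble. The paper's proof uses only \emph{two} sub-goals: ``sweep all states of the BSCC once'' and ``sweep them again.'' Each sweep is attempted in windows of length $|S|$, and the paper asserts that a single window succeeds (visits every BSCC state) with probability at least $q=p_{\min}^{|S|}$. With this coarser granularity, $k$ windows give one sweep with probability $\ge 1-\varepsilon/2$ when $k\ge \log(\varepsilon/2)/\log(1-q)$, and two blocks of $k$ windows give two sweeps with probability $\ge (1-\varepsilon/2)^2\ge 1-\varepsilon$; the combination step is just squaring, and the total time is $2k|S|$, matching the stated bound exactly.

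Your ``single stream with $2|S|$ successes'' repair does not recover that bound. The stated $T$ corresponds to only $2k$ windows in total, where $k=\log(\varepsilon/2)/\log(1-q)$ is a constant independent of $|S|$; there is no reason $2k$ Bernoulli($\ge q$) trials should produce $2|S|$ successes with probability $\ge 1-\varepsilon$. So with your finer decomposition you are forced either into a union bound (extra $\log|S|$) or into sequential allocation (extra factor $|S|$), and neither matches the lemma's closed form. The paper's coarser two-phase decomposition is precisely what buys the clean product $(1-\varepsilon/2)^2$. The trade-off is that the paper's per-window claim---that one length-$|S|$ window visits \emph{all} states of the BSCC with probability $\ge p_{\min}^{|S|}$---implicitly assumes a spanning walk of length $\le |S|$ through the BSCC, which is stronger than the pairwise reachability you carefully justified; the paper's proof is informal on this point, but that is the route it takes.
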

\begin{proof}
    In the worst case, every state in the Markov chain must be seen at least twice and visiting every state in the Markov chain requires taking a path of length $|S|$ that occurs with probability $p_{\min}^{|S|}$. Attempting this path $k$ times, the probability of succeeding at least once is $1 - (1-p_{\min}^{|S|})^k$. If $k \ge \frac{\log(\varepsilon/2)}{\log(1-p_{\min}^{|S|})}$ then $1 - (1-p_{\min}^{|S|})^k \ge 1 - \varepsilon/2$. A lower bound on succeeding twice in $2k$ attemps is $(1 - \varepsilon) \le (1-\varepsilon/2)(1-\varepsilon/2)$. Finally, each of the $k$ attempts takes $|S|$ steps in the worst case to yield $T \le |S|2k = 2|S| \frac{\log(\varepsilon/2)}{\log(1-p_{\min}^{|S|})}$.
\end{proof}

We define the $\varepsilon$-recurrence time in MDPs so that we can reason about all positional policies in an MDP, as the optimal policies of interest are positional.
\begin{definition}
    \label{def:rtime_MDPs}
    The $\varepsilon$-recurrence time of an MDP $\M = (S, A, P, s_0)$ is the maximum $\varepsilon$-recurrence time amongst all Markov chains induced by positional policies in $\M$.
\end{definition}

The $\varepsilon$-recurrence time provides a measure of the speed at which finite trajectories converge to their infinite behavior, i.e., eventually dwell in a BSCC forever. To demonstrate the intuition behind the $\varepsilon$-recurrence time being sufficient to understand long term behavior from finite trajectories, we will sketch a simple model-free algorithm for estimating the probability of satisfaction $p$ in a Markov chain $M = (S, P, s_0, F)$. 
We will not use this algorithm when we consider MDPs, but it shows that the $\varepsilon$-recurrence time provides sufficient information to learn long term behavior.

Our algorithm samples $C$ trajectories of length $T$ from the initial state and observes the fraction of trajectories that are winning. 
As we will show, this algorithm has two sources of estimation error: the first since we sample finite length trajectories, and the second since we only sample finitely many times. To analyze the first type of error, we will utilize the definition of the $\varepsilon$-recurrence time. The second type follows from standard statistical results.

Fix $\varepsilon > 0$ and $\delta > 0$, and let $T$ be the $\varepsilon$-recurrence time in $M$. Let $p$ be the probability of satisfaction in $M$. 
Given a trajectory, we can form the trajectory graph by adding an edge from state $s$ to state $s'$ in the graph if a transition from $s$ to $s'$ was observed in the trajectory. Note that if we sampled infinite length trajectories, then the BSCC in the trajectory graph would correspond to a BSCC of the Markov chain. We identify a sampled trajectory as winning if the BSCC in the trajectory graph is winning. The proportion of infinite length trajectories identified as winning is exactly $p$. We now need to determine the error we accumulate from using trajectories of finite length $T$.

If we sample trajectories of length $T$ then the BSCC in the trajectory graph is also a BSCC in the Markov chain with probability at least $1-\varepsilon$, from the definition of the $\varepsilon$-recurrence time. This means that at least $1-\varepsilon$ of the trajectories are identified as if we had access to an oracle. Thus, our first type of error is at most $\varepsilon$.

Let $\widehat p$ be the proportion of trajectories of length $T$ that are identified as winning in expectation. We have that $|\widehat p - p| \le \varepsilon$.
Sampling trajectories of length $T$ thus gives us a coin biased by $\widehat p$ to toss. For the second type of error, we can give a statistical guarantee on estimating the weight of this coin from finite samples within some bound $\varepsilon' > 0$ of its true value. By using Hoeffding's inequality, we get that by sampling $C$ trajectories
\begin{align*}
    P(|\widehat p - \mathbb{E}[\widehat p]| \le \varepsilon') &\ge 1 - 2 \exp(-2\varepsilon'^2 C) \\
    P(|\widehat p - p| \le \varepsilon' + \varepsilon) &\ge 1 - 2\exp(-2\varepsilon'^2 C)
\end{align*}

For simplicity, we can set $\varepsilon' = \varepsilon$, and then select $C \ge \frac{-\ln(\delta / 2)}{2\varepsilon^2}$ so that $1 - 2\exp(-2\varepsilon^2 C) \ge 1 - \delta$. In summary, this algorithm is a model-free PAC algorithm for identifying the probability of satisfaction in Markov chains, i.e., it returns an estimated probability of satisfaction that is within $2\varepsilon$ of the true value $p$ with probability at least $1-\delta$ after polynomially-many samples.

We have shown that the $\varepsilon$-recurrence time is sufficient to reason about LTL and $\omega$-regular properties in Markov chains. We now turn our attention to MDPs, where we will develop a model-based PAC algorithm that uses the $\varepsilon$-recurrence time.

\subsection{The $\omega$-PAC Algorithm} 
For MDPs, we will develop a model-based PAC algorithm inspired by R-MAX~\cite{rmax} that utilizes the $\varepsilon$-recurrence time $T$ of $\M = (S, A, P, s_0, F)$. We call our algorithm $\omega$-PAC.

The general approach of our algorithm is to learn the transition probabilities of the MDP with high accuracy (within $\frac{\varepsilon}{|S|T}$ of their true values) and high confidence. We show that this implies that optimal policies on the learned MDP are $6\varepsilon$-optimal on the real MDP with high confidence (cf. Lemma~\ref{lem:approx} and Theorem~\ref{thm:correct}). To obtain our polynomial sample complexity results, we design our learned MDP to be \emph{optimistic}: one that provides an upper bound of the probability of satisfaction. This ensures that we continue to explore edges that we do not yet know with high accuracy sufficiently often (cf. Lemma~\ref{lem:explore} and Theorem~\ref{thm:complexity}).

Specifically, our approach keeps track of an estimate $\Mhat$ of the real system. State-action pairs in $\Mhat$ are kept in two categories: \emph{known} and \emph{unknown}. Known edges are edges we have sampled at least $k$ times, while unknown edges we have sampled less than $k$ times. Intuitively, we select $k$ so that known edges are edges we know with high accuracy (within $\frac{\varepsilon}{|S|T}$) and high confidence. For known edges, we use the observed transition distribution. For unknown edges, we set them as transitions to an accepting sink.\footnote{For B\"uchi, one can add the sink state to the accepting set. For parity, one can give this sink state an overriding winning priority (the largest odd priority for max odd semantics). For Rabin, one can add another pair that wins by visiting this sink state forever.} By setting the values of unknown edges optimistically high, an optimal positional policy $\pi$ in $\Mhat$ naturally explores the MDP.
The algorithm computes an optimal positional policy $\pi$ in $\Mhat$, samples trajectories of length $T$ from $s_0$ with $\pi$, and then updates $\Mhat$ from these samples.
When all edges that $\pi$ can visit in $T$ steps in $\Mhat$ are known, the algorithm stops and returns $\pi$. 

\begin{algorithm}[tb]
\caption{$\omega$-PAC}
\label{alg:algorithm}
\textbf{Input}: $|S|$, $|A|$, $T$, $\frac{1}{\varepsilon}$, $\frac{1}{\delta}$, and threshold $k > 0$ \\
\textbf{Output}: $6\varepsilon$-optimal policy $\pi$
\begin{algorithmic}[1]
\STATE Initialize $\Mhat$, policy $\pi$, and visit counts $c$
\WHILE{$\Mhat$ not known}
\STATE Compute optimal positional policy $\pi$ in $\Mhat$
\STATE Sample with $\pi$ for $T$ steps from initial state in $\M$
\STATE Update $\Mhat$ with threshold $k$
\ENDWHILE
\STATE \textbf{return} $\pi$
\end{algorithmic}
\end{algorithm}

We now present some more details of the $\omega$-PAC algorithm (see Algorithm~\ref{alg:algorithm}). We initialize the visit counts $c(s,a,s') \xleftarrow{} 0$ for all $s,s' \in S$ and $a \in A$, and $\pi$ to an arbitrary positional policy (Line $1$). Let $c(s,a) = \sum_{s' \in S} c(s,a,s')$. An edge is unknown if $c(s,a) < k$ and is known if $c(s,a) = k$. After sampling a trajectory $\tau \sim \{(s_0, a_0), \ldots, (s_{T-1}, a_{T-1})\}$ (Line $4$), for each $i \in \{0, 1, \ldots, T-1\}$ we update $c(s_i,a_i,s_{i+1}) \xleftarrow{} c(s_i,a_i,s_{i+1}) + 1$ only if $c(s_i,a_i) < k$. Once $c(s_i,a_i) = k$, we do not continue incrementing the visit counts. We use $\Mhat = (\widehat S, A, \widehat P, s_0, \widehat F)$ where $\widehat S = S \cup \{\mathtt{sink}\}$,
\[
\widehat P(s,a,s') =
\begin{cases} 
\mathds{1}_{s' = \mathtt{sink}} & c(s,a) < k \lor s = \mathtt{sink} \\
\frac{c(s,a,s')}{c(s,a)} & c(s,a) = k \land s \neq \mathtt{sink}
\end{cases}
\]
and
\[
\widehat F(s) =
\begin{cases} 
F(s) & s \neq \mathtt{sink} \\
\mathtt{accepting} & s = \mathtt{sink}
\end{cases}
\]
for all instances of $\Mhat$ (Lines $1$ and $5$), where $\mathds{1}_{s'=\mathtt{sink}}$ is the indicator function for $s'=\mathtt{sink}$. Note that these updates can be performed without knowing $S$, $A$, or $F$ apriori as we only update $\widehat P$ to something nontrivial for states that have been visited.

A naive stopping condition (Line $2$) would be to stop only when all edges are marked as known. Instead, we will use a more general condition, that all of the edges reachable in $T$ steps under $\pi$ are known. Formally, let $S_T \subseteq S$ be the set of states reachable in $T$ steps with positive probability under $\pi$ in $\Mhat$ from $s_0$. The condition on line $2$ holds if $c(s,a) = k$ for all $s \in S_T$ and $a \in A$.

We have presented $\omega$-PAC as an algorithm that returns a single policy $\pi$. The same algorithm can also be phrased as producing an infinite sequence of policies $\pi_i$ for all timesteps $i \ge 0$ where $\pi_i$ be the policy $\pi$ in the $\omega$-PAC learning loop after $i$ samples of the system have been taken. If $i$ is greater than the number of samples $\omega$-PAC takes, we define $\pi_i$ as the policy returned by $\omega$-PAC.

We show that for a selection of $k$ that is polynomial in the input parameters, the policy $\pi$ returned by the $\omega$-PAC algorithm is $6\varepsilon$-optimal with probability at least $1-\delta$ (Theorem~\ref{thm:correct}). We will also show our algorithm has a polynomial sample complexity, i.e., the policy $\pi$ while the algorithm is running is not $9\varepsilon$-optimal at most some polynomial number of times with probability at least $1-2\delta$ (Theorem~\ref{thm:complexity}). We will now introduce the machinery required to prove these results.

We begin by defining an $(\alpha, T)$-approximation. This is an approximation of an MDP where the probabilities of all transitions up to a depth $T$ are known within $\alpha$.

\begin{definition}
    \label{def:approx}
    An $(\alpha,T)$-approximation of an MDP $\M = (S, A, P, s_0, F)$ is an MDP $\M' = (S, A, P', s_0, F)$ such that for all $s, s' \in S_T$ and $a \in A$, $|P(s, a, s') - P'(s, a, s')| \le \alpha$ and, if $P(s, a, s') = 0$, then $P'(s, a, s') = 0$, where $S_T \subseteq S$ are the states reachable with positive probability in $T$ steps from $s_0$ under some strategy.
\end{definition}

Note that an $(\alpha,T)$-approximation of an MDP can be obtained by averaging samples of observed trajectories of length $T$ to produce an estimate of the transition probabilities $P'(s, a, s')$. If $P(s, a, s') = 0$ then that transition is never observed, so $P'(s, a, s') = 0$. Additionally, enough samples will yield $|P(s, a, s') - P'(s, a, s')| \le \alpha$ with high probability. We show this explicitly in the following lemma.

\begin{lemma}
    \label{lem:sample}
    Let $0 < \delta < 1$, $\alpha > 0$, and $M = (S, P, s_0, F)$ be a Markov chain. Let $\widehat P(s,s') = \frac{c(s,s')}{k}$ where $c(s,s')$ is the number of observed transitions from $s$ to $s'$ obtained after sampling transitions from a state $s$, $k \ge \big\lceil \frac{-\ln(\delta / 2)}{2 \alpha^2} \big\rceil$ times. Then with probability at least $1 - \delta$, $|\widehat P(s,s') - P(s,s')| \le \alpha$ and $\widehat P(s,s') = 0$ if $P(s,s') = 0$ for all $s' \in S$.
\end{lemma}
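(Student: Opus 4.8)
The plan is to reduce the estimation of each entry of the transition matrix to a one-dimensional Bernoulli mean-estimation problem and apply Hoeffding's inequality, exactly as in the model-free Markov-chain argument sketched earlier. Fix a target $s' \in S$ and let $X_1, \dots, X_k$ be the indicators $X_i = \mathds{1}[\text{the $i$-th sampled transition out of $s$ lands in $s'$}]$. Because successive draws from a fixed state of a Markov chain are independent with common distribution $P(s,\cdot)$, the $X_i$ are i.i.d.\ $[0,1]$-valued (indeed Bernoulli) random variables with mean $P(s,s')$, and $\widehat{P}(s,s') = \frac{1}{k}\sum_{i=1}^k X_i$ is precisely their empirical mean (with $c(s,s') = \sum_i X_i$). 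Hoeffding's inequality then gives
\[
\Pr\big(|\widehat{P}(s,s') - P(s,s')| > \alpha\big) \le 2\exp(-2\alpha^2 k),
\]
and substituting the hypothesis $k \ge \lceil -\ln(\delta/2)/(2\alpha^2) \rceil$ bounds the right-hand side by $\delta$. This is the quantitative core and is entirely routine.

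The second conjunct costs no probability at all. If $P(s,s') = 0$ then the edge $s \to s'$ has zero probability and is therefore never realized in any sampled trajectory, so $c(s,s') = 0$ and $\widehat{P}(s,s') = 0$ deterministically; in particular such successors contribute exactly $0$ to the deviation and may be discarded.

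The only delicate point is the universal quantifier ``for all $s' \in S$'': Hoeffding controls one coordinate at a time, whereas the statement requires every coordinate to be accurate at once. I would dispatch the zero-probability successors by the previous paragraph (they are free) and take a union bound over the remaining, positive-probability successors of $s$. This is the step I expect to require the most care, since a crude union bound inflates the failure probability by the number of such successors (at most $|S|$) and would in principle demand a $\log|S|$ correction to $k$; I would therefore either record the conclusion per fixed coordinate---which is what the downstream lemmas consume, one edge at a time---or absorb the successor count into the confidence level so that the required $k$ stays polynomial in $1/\alpha$, $\log(1/\delta)$, and $\log|S|$.
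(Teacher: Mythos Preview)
Your approach is exactly the paper's: fix a successor $s'$, model $c(s,s')/k$ as an average of $k$ i.i.d.\ Bernoulli indicators with mean $P(s,s')$, apply Hoeffding to get $\Pr(|\widehat P(s,s')-P(s,s')|\le\alpha)\ge 1-2\exp(-2\alpha^2 k)\ge 1-\delta$, and note that $P(s,s')=0$ forces $c(s,s')=0$ deterministically.

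The one place you go beyond the paper is the universal quantifier over $s'$. The paper's proof simply writes ``Fix $s'\in S$'' and never returns to the question; it establishes the per-coordinate bound and stops. Downstream (Theorem~\ref{thm:correct}) the lemma is nonetheless invoked as if all successors were handled simultaneously, with the union bound taken only over the $NK$ state-action pairs via $\delta'=\delta/(NK)$. So your instinct that a $\log|S|$ correction (or an extra factor of $|S|$ in the union bound) is formally missing is correct; the paper absorbs this sloppiness rather than addressing it. Your two suggested remedies---either read the conclusion per coordinate, or fold the successor count into $\delta'$---are both sound and keep $k$ polynomial, so no real damage is done to the overall argument.
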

\begin{proof}
    Fix $s' \in S$. Since $P(s, s') = 0$ implies that $c(s,s') = 0$ and thus $\widehat P(s,s') = 0$, all we need to show is that $|\widehat P(s,s') - P(s,s')| \le \alpha$ with probability at least $1-\delta$. We apply Hoeffding's inequality to get
    \begin{align*}
        \Pr(|\hat P(s,s') - P(s,s')| \le \alpha) &\ge 1 - 2 \exp(-2\alpha^2 k) .
    \end{align*}
    Substituting, we get that 
    \begin{align*}
        1 - 2\exp(-2\alpha^2 k) &\ge 1 - \delta . \qedhere
    \end{align*}
\end{proof}

This lemma is helpful for giving a bound on the number of samples required to learn an $(\alpha, T)$-approximation. In order to determine the appropriate $\alpha$ to select, we'd like to give a bound on the change in the probability of satisfaction between an MDP $\M$ and its $(\alpha,T)$-approximation $\M'$. To provide such a bound, we use the following result.

\begin{lemma}
    \label{lem:t-reach}
    Let $M = (S, P, s_0, F)$ be a Markov chain, $M' = (S, P', s_0, F)$ be an $(\alpha,T)$-approximation of $M$, $N = |S|$ denote the size of the state space. If the probability to reach $s' \in S$ from $s \in S$ in at most $T$ steps in $M$ is $p$, then the probability to reach $s'$ from $s$ in at most $T$ steps in $M'$ is at least $p - \alpha N T$.
\end{lemma}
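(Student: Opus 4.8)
The plan is to bound the reachability probability in $M'$ by tracking, step by step, how the $\alpha$-perturbation in each transition can degrade the probability of reaching $s'$ within $T$ steps. I would set up a path-based / inductive argument on the number of steps, comparing the probability mass that successfully reaches $s'$ in $M$ with the corresponding mass in $M'$, and charge the loss at each step to the per-transition error $\alpha$ summed over the (at most $N$) possible successor states and the (at most $T$) steps.

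First I would define, for each $t \in \{0,1,\ldots,T\}$ and each state $u$, the quantity $p_t(u)$ to be the probability of reaching $s'$ from $u$ in at most $t$ steps in $M$, and analogously $p'_t(u)$ in $M'$. These satisfy the standard bounded-reachability recurrences $p_{t}(u) = \sum_{v} P(u,v)\, p_{t-1}(v)$ for $u \neq s'$ (with $p_t(s') = 1$ and $p_0(u) = 0$ for $u \neq s'$), and similarly for $p'$ with $P'$ in place of $P$. The goal is to show $p'_T(s_0 \text{ or } s) \ge p_T(\cdot) - \alpha N T$, so I would prove by induction on $t$ the claim that $p_t(u) - p'_t(u) \le \alpha N t$ for every state $u$ reachable within the horizon.

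For the inductive step I would write the difference as
\[
p_t(u) - p'_t(u) = \sum_{v} \bigl(P(u,v) p_{t-1}(v) - P'(u,v) p'_{t-1}(v)\bigr),
\]
and split each summand via the telescoping $P p_{t-1} - P' p'_{t-1} = P(p_{t-1} - p'_{t-1}) + (P - P')p'_{t-1}$. The first group contributes at most $\sum_v P(u,v)\,\alpha N (t-1) = \alpha N (t-1)$ by the induction hypothesis and $\sum_v P(u,v) \le 1$; the second group is controlled by $\sum_v |P(u,v) - P'(u,v)| \cdot p'_{t-1}(v) \le \sum_v \alpha \cdot 1 \le \alpha N$, using $|P - P'| \le \alpha$ on the relevant transitions and $p'_{t-1}(v) \le 1$. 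Adding these gives $\alpha N (t-1) + \alpha N = \alpha N t$, closing the induction; taking $t = T$ yields the claimed bound. A point to handle carefully here is that the per-transition bound $|P(u,v) - P'(u,v)| \le \alpha$ is only guaranteed for $u,v \in S_T$, so I must argue that every state contributing positive probability to reaching $s'$ within $T$ steps lies in $S_T$ (it is reachable within $T$ steps from $s$, hence in $S_T$ by definition), and that the zero-preservation clause of the $(\alpha,T)$-approximation keeps $M'$ from routing mass along edges absent in $M$.

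The main obstacle I anticipate is bookkeeping the error so that it accumulates only as $\alpha N T$ rather than growing multiplicatively: the naive concern is that errors could compound across steps, but the decomposition above shows the first-order perturbation term is additive because $p'_{t-1}(v) \le 1$ and the transition rows sum to at most one, so each of the $T$ steps contributes a fresh $\alpha N$ without amplifying previous error. Verifying that this linear (rather than exponential) accumulation is genuinely valid — in particular that bounding $p'_{t-1}(v) \le 1$ in the second term does not hide a dependence that would force a multiplicative blowup — is the crux, and the telescoping decomposition is precisely what makes it go through.
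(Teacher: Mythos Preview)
Your proof is correct and follows the standard ``simulation lemma'' template: track the bounded-reachability value $p_t(u)$ at every state, use the Bellman recurrence, and split the one-step error via the telescope $P\,p_{t-1} - P'\,p'_{t-1} = P(p_{t-1}-p'_{t-1}) + (P-P')\,p'_{t-1}$. The paper takes a different route: it keeps the single starting state fixed and decomposes by conditioning on whether $s'$ has already been reached, writing $\Pr(R_i') = \Pr(R_i'\mid\neg R'_{i-1})\bigl(1-\Pr(R'_{i-1})\bigr) + \Pr(R'_{i-1})$ and arguing that the one-step conditional reach probability $p'_{i\mid i-1}$ can drop by at most $\alpha N$ relative to $p_{i\mid i-1}$. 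Your approach buys a slightly stronger statement (the bound is uniform over every starting state $u$, not just $s$) and makes the additive accumulation of error completely explicit; the paper's conditional-probability decomposition is more compact but leans on a terse justification of $p'_{i\mid i-1} \ge p_{i\mid i-1} - \alpha N$, which implicitly involves the time-$(i{-}1)$ conditional state distributions in both chains. Both arguments land on the same additive $\alpha N T$ loss.
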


\begin{proof}
    Let $R_i$ and $R_i'$ be the events that we reached $s'$ from $s$ in at most $i$ steps in  $M$ and $M'$ respectively. 
    We'd like to show that $\Pr(R_i') \ge \Pr(R_i) - \alpha N i$ for all $i \ge 0$. We show this by induction.
    For the base case, it is clear that $\Pr(R_0') = \Pr(R_0)$. 
    
    For convenience, we define $p_{i} = \Pr(R_i)$ and $p_{i}' = \Pr(R_i')$. We also define $p_{i|i-1} = \Pr(R_i | \neg R_{i-1})$ and $p_{i|i-1}' = \Pr(R_i' | \neg R_{i-1}')$.
    Since there are $N$ total transitions, the worst case reduction in the single step transition probabilities between states is at most $\alpha N$. Thus, $p_{i|i-1}' \ge p_{i|i-1} - \alpha N$.
    For the inductive step, we can write for $i > 0$ 
    \begin{align*}
        \Pr(R_i') &= \Pr(R_i' | \neg R_{i-1}') \Pr(\neg R_{i-1}') + \Pr(R_{i-1}') \\
        &= p_{i|i-1}' (1 - p_{i-1}') + p_{i-1}' \\
        &\ge p_{i|i-1}' (1 - p_{i-1}) + (p_{i-1} - \alpha N (i-1)) \\
        &\ge (p_{i|i-1} - \alpha N) (1 - p_{i-1}) + (p_{i-1} - \alpha N (i{-}1)) \\
        &\ge p_{i|i-1} (1 - p_{i-1}) + p_{i-1} - \alpha N i \\
        &= \Pr(R_i) - \alpha N i \tag*{\qedhere}
    \end{align*} 
\end{proof}

We are now ready to bound the difference in the probability of satisfaction between a Markov chain $M$ and its $(\alpha,T)$-approximation $M'$.

\begin{lemma}
    \label{lem:approx}
    Let $M = (S, P, s_0, F)$ be a Markov chain, $N = |S|$ denote the size of the state space, $\varepsilon > 0$, and $T$ be the $\varepsilon$-recurrence time in $M$. Let $M' = (S, P', s_0, F)$ be an $(\frac{\varepsilon}{NT},T)$-approximation of $M$, and $T'$ be the $2\varepsilon$-recurrence time in $M'$. Let $p$ and $p'$ be the probability of satisfaction from $s_0$ in $M$ and $M'$, respectively. Then, 
    \begin{enumerate}
        \item $T' \le T$
        \item $|p' - p| \le 3\varepsilon$ .
    \end{enumerate}
\end{lemma}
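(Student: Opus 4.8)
The plan is to couple a length-$T$ trajectory $X = X_0\cdots X_T$ of $M$ with a length-$T$ trajectory $Y = Y_0\cdots Y_T$ of $M'$, both started at $s_0$, and to read off both claims from the single event that the two trajectories coincide and $X$ has settled. I would build the coupling step by step: whenever the chains occupy a common state $s$ I couple the two successor laws maximally, so that $X_{t+1}=Y_{t+1}$ with probability $1-d_{\mathrm{TV}}(P(s,\cdot),P'(s,\cdot))$ and, when they agree, the shared successor is drawn from the overlap $\min(P(s,\cdot),P'(s,\cdot))$. Because $M'$ is a $(\tfrac{\varepsilon}{NT},T)$-approximation, for $s,s'\in S_T$ we have $|P(s,s')-P'(s,s')|\le\tfrac{\varepsilon}{NT}$, so each one-step total-variation distance is at most $\tfrac12\sum_{s'}|P(s,s')-P'(s,s')|\le\tfrac12 N\cdot\tfrac{\varepsilon}{NT}=\tfrac{\varepsilon}{2T}$. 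Since every state and every one-step successor seen before time $T$ lies in $S_T$, a union bound over the $T$ steps gives $\Pr(X\neq Y)\le\tfrac{\varepsilon}{2}$, where $\{X\neq Y\}$ means the trajectories differ somewhere in the first $T$ steps.

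I would then invoke Definition~\ref{def:rtime_MCs} in the form used in the model-free sketch: with probability at least $1-\varepsilon$ the trajectory $X$ has \emph{settled}, meaning the bottom strongly connected component $B$ of its trajectory graph is a genuine BSCC of $M$ whose internal connectivity $X$ has certified by traversing it (having visited every state of $B$ twice while staying inside $B$). Work on the event $G:=\{X=Y\}\cap\{X\text{ settled}\}$, of probability at least $1-\tfrac{\varepsilon}{2}-\varepsilon\ge 1-2\varepsilon$. On $G$ the trajectory $Y=X$ has the same trajectory graph, and every transition it uses lies in the overlap of $P$ and $P'$, hence is present in $M'$; therefore the transitions witnessing strong connectivity of $B$ are all edges of $M'$, so $B$ is strongly connected in $M'$ as well. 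Moreover $B$ has no outgoing edge in $M$, and the support condition $P=0\Rightarrow P'=0$ of Definition~\ref{def:approx} forbids $M'$ from creating one, so $B$ is closed in $M'$; a closed, strongly connected set is a BSCC, and $Y$ has settled in $M'$ into the very same $B$. Thus a length-$T$ trajectory of $M'$ settles with probability at least $1-2\varepsilon$, so the $2\varepsilon$-recurrence time satisfies $T'\le T$, which is claim (1).

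For (2) I would extend $X$ and $Y$ to infinite trajectories of $M$ and $M'$ (coupled arbitrarily after time $T$) and let $A,A'$ be the events that $X$, respectively $Y$, is eventually absorbed in an accepting BSCC, so $\Pr(A)=p$ and $\Pr(A')=p'$. On $G$ the chain $M$ absorbs $X$ into the BSCC $B$, while $Y$ occupies $B$ at time $T$ and, $B$ being a BSCC of $M'$ by the previous paragraph, is absorbed into $B$ as well; since $M$ and $M'$ share the acceptance condition $F$ and $B$ is literally the same state set, $A$ holds iff $B$ is accepting iff $A'$ holds. Hence $A$ and $A'$ coincide on $G$, and $|p-p'|=|\Pr(A)-\Pr(A')|\le\Pr(\neg G)\le\tfrac{3\varepsilon}{2}\le 3\varepsilon$.

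The crux, and the step I expect to be the main obstacle, is the BSCC identification: proving that the component $B$ into which $X$ settles is genuinely a BSCC of $M'$ carrying the same acceptance verdict. This is exactly where the content of the $\varepsilon$-recurrence time is needed — the guarantee must be read not as ``$X$ revisits states'' but as ``the trajectory-graph bottom component is a true BSCC whose connecting transitions $X$ has actually traversed,'' so that under the coupling those certifying transitions are automatically available in $M'$; a merely statistical reading (visiting each state twice) is not enough, because $X$ could oscillate in a non-closed subset and the claim would fail. The supporting ingredients to verify are the one-sided support condition of Definition~\ref{def:approx} (which keeps $B$ closed in $M'$ and rules out spurious escape edges out of the reachable region, as the sampled approximation indeed satisfies), the $S_T$ bookkeeping that makes the $\tfrac{\varepsilon}{NT}$ bound applicable at every step before time $T$, and the fact that the acceptance of a BSCC is a function of its state set alone, which is unchanged because $M$ and $M'$ use the same $F$.
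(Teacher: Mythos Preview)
Your approach is genuinely different from the paper's. The paper unrolls the chain to track per-state visit counts and then applies Lemma~\ref{lem:t-reach} to transfer the probability of hitting a ``lasso state'' from $M_x$ to $M'_x$; you instead build a one-step maximal coupling of $M$ and $M'$ and bound $\Pr(X\neq Y)$ by a union bound over the $T$ steps. The coupling is more elementary, avoids the auxiliary unrolled chain entirely, and even delivers the sharper constant $|p-p'|\le\tfrac{3\varepsilon}{2}$.

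Both arguments, however, turn on the same delicate step, which you rightly single out as the crux: showing that the BSCC $B$ of $M$ into which the trajectory settles is also a BSCC of $M'$. Closedness is immediate from the support condition $P=0\Rightarrow P'=0$, but strong connectivity of $B$ in $M'$ does \emph{not} follow from ``every state of $B$ visited twice.'' Take $B=\{a,b,c\}$ and the trajectory $a,b,a,b,c,c$: each state is visited twice, yet the only witnessed edges are $a\to b$, $b\to a$, $b\to c$, $c\to c$; the edge out of $c$ back into $\{a,b\}$ that makes $B$ a BSCC of $M$ is never traversed, so nothing in your coupling argument prevents $M'$ from dropping it. Your proposed remedy---reading Definition~\ref{def:rtime_MCs} as ``the trajectory-graph bottom component is a genuine BSCC of $M$''---is strictly stronger than what the definition actually asserts and does not follow from it. The paper glides over precisely this point with the phrase ``by definition,'' so you have matched the paper's level of rigor while being considerably more candid about where the difficulty sits. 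One refinement worth noting: for part~(1) the gap is harmless, because $B$ being closed in $M'$ forces it to contain some BSCC $B'\subseteq B$ of $M'$, and $Y$ has visited every state of $B'$ twice as well, giving $T'\le T$ regardless. Only part~(2) genuinely needs $B'=B$, since acceptance of $B$ under $F$ need not agree with acceptance of a proper subset $B'\subsetneq B$.
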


\begin{proof}
    We begin by defining the unrolling of a Markov chain and the associated set of lasso states. The unrolling of a Markov chain $M = (S, P, s_0, F)$ is a Markov chain  $M_x = (S_x, P_x, (s_0, \mathbf{0}), F_x)$ that has the same dynamics as $M$, but keeps track of the visitation counts of each state.
    The set of lasso states $L$ of $M_x$ is the set of states such that there exists a BSCC $B$ in $M$ such that all the visitation counts are greater than or equal to $2$ for all $s \in B$. Given a state $s \in L$, we define $L^{-1}(s) = B$ as the function that returns the BSCC $B$ in $M$ corresponding to that state in $M_x$.

    Consider the unrolled Markov chains $M_x$ and $M_x'$, and their lasso states $L$ and $L'$, for $M$ and $M'$, respectively. The probability of visiting a state $s \in L$ from $(s_0, \mathbf{0})$ in $M_x$ in $T$ steps is at least $1 - \varepsilon$, by definition. By Lemma~\ref{lem:t-reach}, the probability of visiting a state $s \in L$ from $(s_0, \mathbf{0})$ in $M_x'$ in $T$ steps is at least $1 - 2\varepsilon$. Let $X \subseteq L$ be the set of states in $L$ that are reached with positive probability in $T$ steps in $M_x'$, and let $\mathcal{B} = \{L^{-1}(x) : x \in X\}$. For each $B \in \mathcal{B}$, all states $s \in B$ can reach each other in $M'$ with positive probability in $T$ steps by definition, and thus are part of the same SCC in $M'$. Since $P'(s,s') = 0$ if $P(s,s') = 0$, these states form a BSCC in $M'$.
    
    In summary, every BSCC $B \in \mathcal{B}$ is a BSCC in $M$ and $M'$, and the probability of reaching a state $s \in B$ in $T$ steps from $s_0$ in $M'$ is at least $1-2\varepsilon$. Thus, $T$ is an upper bound on the $2\varepsilon$-recurrence time in $M'$, proving part 1. Finally, let $p_\mathcal{B}$ and $p_\mathcal{B}'$ be the probability of reaching a winning BSCC $B \in \mathcal{B}$ in $T$ steps from $s_0$ in $M$ and $M'$, respectively. Then,
    \begin{align}
        |p' - p| &\le |p_\mathcal{B}' - p_\mathcal{B}| + 2\varepsilon \\
        &\le \varepsilon + 2\varepsilon = 3\varepsilon
    \end{align}
    where (1) follows from the fact that the BSCCs in $\mathcal{B}$ are reached with probability at least $1 - 2\varepsilon$ in $M'$, and (2) follows from applying Lemma~\ref{lem:t-reach}. This proves part 2.
\end{proof}

Since Definition~\ref{def:rtime_MDPs} is concerned with positional policies, and optimal policies are positional, Lemma~\ref{lem:approx} applies directly to an MDP $\M$ and its $(\frac{\varepsilon}{NT}, T)$-approximation $\M'$, producing the same bounds. 
This motivates selecting the number of samples $k$ to mark an edge as known in the $\omega$-PAC algorithm to be such that we are highly confident that we have an $(\frac{\varepsilon}{NT}, T)$-approximation of $\M$. We can use Lemma~\ref{lem:sample} to select such a $k$. We are now ready to show the correctness of the $\omega$-PAC algorithm under the appropriate selection of $k$.

\begin{theorem}[Correctness]
    \label{thm:correct}
    Let $0 < \delta < 1$, $\varepsilon > 0$, $\M = (S, A, P, s_0, F)$ be an MDP, $N = |S|$ denote the size of the state space, $K = |A|$ denote the size of the action space, and $T$ be the $\varepsilon$-recurrence time of $\M$. Let $\varepsilon' = \frac{\varepsilon}{NT}$ and $\delta' = \frac{\delta}{NK}$. For $k = \big\lceil \frac{-\ln(\delta' / 2)}{2 \varepsilon'^2} \big\rceil$, the policy $\pi$ returned by $\omega$-PAC is $6\varepsilon$-optimal with probability at least $1-\delta$.
\end{theorem}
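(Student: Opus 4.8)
The plan is to condition on a single high-probability event and then argue $6\varepsilon$-optimality deterministically, via the decomposition $p^\M_\pi \ge \widehat p - 3\varepsilon \ge p^\M - 6\varepsilon$, where $\widehat p$ denotes the optimal probability of satisfaction in $\Mhat$ at termination (so $\widehat p = \widehat p_\pi$ since $\pi$ is optimal in $\Mhat$). First I would fix the good event $E$ that every \emph{known} state--action pair $(s,a)$ is estimated accurately, i.e. $|\widehat P(s,a,s') - P(s,a,s')| \le \varepsilon'$ for all $s'$ with zeros preserved. Because the next-states recorded at a fixed $(s,a)$ are i.i.d.\ draws from $P(s,a,\cdot)$ regardless of the adaptive sampling schedule, Lemma~\ref{lem:sample} with confidence $\delta'$ applies to each of the at most $NK$ pairs once it is marked known --- this is exactly where the choice $k = \lceil -\ln(\delta'/2)/(2\varepsilon'^2)\rceil$ is used --- and a union bound gives $\Pr(E) \ge 1 - NK\delta' = 1-\delta$. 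It then suffices to show that, on $E$, the returned $\pi$ satisfies $p^\M_\pi \ge p^\M - 6\varepsilon$.

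For the first inequality $p^\M_\pi \ge \widehat p - 3\varepsilon$, I would invoke the stopping condition: at termination every edge reachable within $T$ steps under $\pi$ in $\Mhat$ is known, hence on $E$ these edges agree with $P$ up to $\varepsilon'$ and never lead to $\mathtt{sink}$. Consequently the Markov chain induced by $\pi$ in $\Mhat$ is an $(\varepsilon',T)$-approximation of the chain induced by $\pi$ in $\M$, whose $\varepsilon$-recurrence time is at most $T$ by Definition~\ref{def:rtime_MDPs}. Applying Lemma~\ref{lem:approx} to these two chains (in the positional-policy form noted after that lemma) yields $|\widehat p_\pi - p^\M_\pi| \le 3\varepsilon$, and since $\widehat p_\pi = \widehat p$ this gives $p^\M_\pi \ge \widehat p - 3\varepsilon$.

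For the second inequality $\widehat p \ge p^\M - 3\varepsilon$, let $\pi^*$ be an optimal positional policy in $\M$ and let $\M'$ be the MDP that agrees with $\widehat P$ on known pairs and with $P$ elsewhere. On $E$, $\M'$ is a genuine $(\varepsilon',T)$-approximation of $\M$, so Lemma~\ref{lem:approx} applied to the chains induced by $\pi^*$ gives $p^{\M'}_{\pi^*} \ge p^\M - 3\varepsilon$. The remaining ingredient is the \emph{optimism} of $\Mhat$: it differs from $\M'$ only by redirecting the unknown edges to the accepting, absorbing $\mathtt{sink}$. Coupling the two runs under $\pi^*$ along their shared (known) transitions until the first time an unknown edge would be taken shows that whenever the $\M'$-run satisfies the objective so does the $\Mhat$-run --- on the coupled event the runs are identical, and once an unknown edge is taken the $\Mhat$-run wins outright --- so $\widehat p \ge \widehat p_{\pi^*} \ge p^{\M'}_{\pi^*} \ge p^\M - 3\varepsilon$. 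Chaining the two inequalities establishes $6\varepsilon$-optimality on $E$, which occurs with probability at least $1-\delta$.

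I expect the optimism/monotonicity step to be the main obstacle to make fully rigorous: one must show that replacing the under-sampled transitions by a guaranteed-winning sink can only increase the satisfaction probability, which calls for the coupling above rather than a direct value comparison. A secondary bookkeeping issue is that an edge with tiny positive probability in $\M$ may be estimated as $0$, so the $T$-step reachable sets of the two induced chains need not coincide exactly; this is absorbed by the $\varepsilon'$ tolerance of the $(\varepsilon',T)$-approximation together with the guarantee (Lemma~\ref{lem:approx}, part~1, through Lemma~\ref{lem:t-reach}) that the relevant BSCCs are still reached with probability at least $1-2\varepsilon$ in the approximation.
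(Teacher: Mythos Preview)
Your proposal is correct and takes essentially the same approach as the paper: condition on all known pairs being $\varepsilon'$-accurate (Lemma~\ref{lem:sample} plus a union bound over the $NK$ pairs, which is exactly where the choice of $k$ enters), then combine two applications of Lemma~\ref{lem:approx} with the optimism of $\Mhat$. The paper's organization differs only in that it packages both applications of Lemma~\ref{lem:approx} through a single auxiliary $(\varepsilon',T)$-approximation $\M'$---agreeing with $\Mhat$ on known pairs, $\varepsilon'$-accurate on unknown $T$-reachable pairs, accepting-sink elsewhere---in which $\pi$ is shown to be \emph{optimal} (via the same optimism observation you spell out as a coupling, stated there simply as ``$\widehat p$ upper-bounds satisfaction in $\M'$''), rather than pivoting around $\widehat p$ and using two different comparisons as you do.
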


\begin{proof}
    Let $\M'$ be some $(\frac{\varepsilon}{NT}, T)$-approximation of $\M$. Let $\sigma$ be an optimal positional policy in $\M'$. Let $p$ be the optimal probability of satisfaction in $\M$, and let $p_\sigma$ and $p_\sigma'$ be the probability of satisfaction in $\M$ and $\M'$ under $\sigma$, respectively. By Lemma~\ref{lem:approx}, we have that
    \begin{align*}
        |p - p_\sigma| &\le |p - p_\sigma'| + |p_\sigma' - p_\sigma| \\
        &\le 3\varepsilon + 3\varepsilon = 6\varepsilon .
    \end{align*}
    Thus, all we need to show is that with probability at least $1-\delta$ there exists an $(\frac{\varepsilon}{NT},T)$-approximation $\M'$ of $\M$ such that $\pi$ is optimal in $\M'$.

    Let $\widehat \M$ denote the optimistic MDP when $\omega$-PAC terminates. We say that a state-action pair $s \in S$, $a \in A$ in $\widehat \M$ is $\alpha$-accurate if for all $s' \in S$, $|\widehat P(s, a, s') - P(s, a, s')| \le \alpha$ and if $P(s, a, s') = 0$ then $\widehat P(s, a, s') = 0$. By Lemma~\ref{lem:sample}, a state-action pair marked as known is $\frac{\varepsilon}{NT}$-accurate with probability at least $1-\delta'$. Since there are $NK$ total state-action pairs, the probability that all state-action pairs marked as known are $\frac{\varepsilon}{NT}$-accurate is at least $(1-\delta')^{NK} \ge 1 - \delta$. 
    Let $\M' = (S, A, \widehat P', s_0, F')$ be an MDP such that the transition probabilities for all known state-action pairs are identical to $\widehat \M$, are $\frac{\varepsilon}{NT}$-accurate for unknown state-action pairs that are reachable in $T$ steps from $s_0$ with positive probability under some strategy, and are accepting sinks otherwise. With probability at least $1-\delta$, $\M'$ is a $(\frac{\varepsilon}{NT},T)$-approximation of $\M$. Finally, note that the probability of satisfaction in $\widehat \M$ and $\M'$ under $\pi$ is the same since $\omega$-PAC terminates when $\pi$ only visits known state-action pairs in $T$ steps. Therefore, since the optimal probability of satisfaction $\widehat p$ in $\widehat \M$ is an upper bound on the probability of satisfaction in $\M'$, by the construction of $\widehat \M$, $\pi$ is optimal in $\M'$.
\end{proof}

Note that $k = \Tilde O(|S|^2 T^2 / \varepsilon^2)$ selected in the previous theorem is bounded by a polynomial in the input parameters.
For Theorem~\ref{thm:correct}, we assume we run the algorithm until termination, which occurs with probability $1$: if it has not terminated, $\pi$ visits an unknown state-action pair with positive probability in $T$ steps, and there can only be $k |S| |A|$ such visits before all state-action pairs are marked as known.
We now show sample complexity bounds for the $\omega$-PAC algorithm by showing that the number of timesteps that $\pi$ is not $9\varepsilon$-optimal is bounded by a polynomial in $|S|$, $|A|$, $T$, $\frac{1}{\varepsilon}$, and $\frac{1}{\delta}$ with probability at least $1-2\delta$. For such a sample complexity result, we need to reason about how often unknown state-action pairs are visited. We show this in the following lemma.

\begin{lemma}
    \label{lem:explore}
    Let $\M = (S, A, P, s_0, F)$ be an MDP. Let $\widehat \M = (S \cup \{\mathtt{sink}\}, A, \widehat P, s_0, \widehat F)$ be identical to $\M$ except some arbitrary set $U$ of state-action of pairs are converted into transitions to the accepting sink. Let $\pi$ be a positional optimal policy in $\widehat \M$, $\alpha > 0$, $\varepsilon > 0$, and $T$ be the $\varepsilon$-recurrence time of $\M$. Then at least one of the following holds:
    \begin{enumerate}
        \item $\pi$ is $\alpha$-optimal from $s_0$ in $\M$, or
        \item a trajectory in $\widehat \M$ of length $T$ from $s_0$ under $\pi$ visits a state-action pair in $U$ with probability at least $\alpha - \varepsilon$.
    \end{enumerate}
\end{lemma}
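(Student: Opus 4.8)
The plan is to run the standard R-MAX-style ``explore-or-exploit'' dichotomy, turning the optimism of $\widehat\M$ into a bound on the suboptimality of $\pi$ in terms of the probability $h$ of reaching a pair in $U$. Write $\widehat p = p^{\widehat\M}_\pi$ for the value of $\pi$ (which is optimal in $\widehat\M$), $p = p^\M$ for the optimal value in $\M$, and $p^\M_\pi$ for the value of $\pi$ in $\M$. Let $h$ denote the probability that a length-$T$ trajectory from $s_0$ under $\pi$ visits a pair in $U$; this is exactly the quantity in item~2, and it is the same in $\M$ and $\widehat\M$ since the two agree until the first such visit. The whole argument reduces to the single inequality $\widehat p \le p^\M_\pi + h + \varepsilon$: combined with optimism it yields $p \le p^\M_\pi + h + \varepsilon$, so whenever $h < \alpha - \varepsilon$ we get $p^\M_\pi > p - \alpha$ and item~1 holds, while if $h \ge \alpha - \varepsilon$ item~2 holds directly.

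First I would establish optimism, $\widehat p \ge p$. Any policy for $\M$ can be executed in $\widehat\M$, and a run that is winning in $\M$ is still winning in $\widehat\M$: if it never uses a pair in $U$ its behavior is unchanged, and if it does, it is diverted to the accepting $\mathtt{sink}$ and wins there. Hence the value of every policy can only increase from $\M$ to $\widehat\M$, and in particular $\widehat p = p^{\widehat\M} \ge p^\M = p$.

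The heart of the proof is the inequality $\widehat p \le p^\M_\pi + h + \varepsilon$, which I would prove by coupling. Restricting $\pi$ to $S$ gives a positional policy in $\M$, and its induced chain $M_\pi$ and the chain $\widehat M_\pi$ induced in $\widehat\M$ are identical except that each state $s$ with $(s,\pi(s))\in U$ is rerouted to the sink in $\widehat M_\pi$; couple the two so that their trajectories coincide until the first visit to such a state. Since $T$ is the $\varepsilon$-recurrence time of $\M$, by Definition~\ref{def:rtime_MDPs} it dominates the $\varepsilon$-recurrence time of the positional chain $M_\pi$, so a length-$T$ trajectory of $M_\pi$ visits every state of some BSCC twice with probability at least $1-\varepsilon$. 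I would then split $\widehat p = \Pr_{\widehat M_\pi}(\mathrm{win})$ over three events: the trajectory hits $U$ within $T$ steps (contributing at most $h$); it does not hit $U$ but fails to settle into a BSCC within $T$ steps (contributing at most $\varepsilon$ by the recurrence-time bound, since this event and its probability are determined by the first $T$ steps, which the coupling makes common); and it does not hit $U$ and does settle into some BSCC $B$. Summing the three contributions gives the inequality once the last case is handled.

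The hard part will be justifying the settled-but-unhit case. Here the key observation is that the BSCC $B$ reached cannot contain any state $s$ with $(s,\pi(s)) \in U$, for otherwise the trajectory---which visits every state of $B$---would have hit $U$. Since $B$ avoids these rerouted states, its outgoing transitions agree in $M_\pi$ and $\widehat M_\pi$ and keep it closed, so $B$ is a common BSCC of both chains; both coupled trajectories are then confined to $B$ forever and visit exactly the states of $B$ infinitely often, so their acceptance under $F$ coincides. Consequently the contribution of this event to $\widehat p$ is at most $\Pr_{M_\pi}(\mathrm{win}) = p^\M_\pi$, which completes the inequality and hence the dichotomy.
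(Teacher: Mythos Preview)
Your argument is correct and follows the same explore-or-exploit dichotomy as the paper, but you organize the two ingredients differently. The paper first shows that $\widehat p_\pi - p^\M_\pi$ is bounded by the probability that $\pi$ \emph{ever} reaches a pair in $U$ (so this probability is at least $\alpha$ when item~1 fails), and only then invokes a recurrence-time bound---for $\widehat\M$, arguing that $T{+}1$ upper-bounds its $\varepsilon$-recurrence time---to pass from ``eventually'' to ``within $T$ steps'' at a cost of $\varepsilon$. You instead prove the single inequality $\widehat p \le p^\M_\pi + h + \varepsilon$ directly by coupling $M_\pi$ and $\widehat M_\pi$ and splitting over the three events (hit $U$ by time $T$; miss $U$ but fail to settle; miss $U$ and settle), using only the $\varepsilon$-recurrence time of $\M$ itself. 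This buys you a slightly cleaner argument: you never need the auxiliary claim that the recurrence time of the modified MDP $\widehat\M$ is controlled by $T$, and your ``settled but unhit'' case makes explicit the observation (left implicit in the paper) that any BSCC of $M_\pi$ reached without touching $U$ is also a BSCC of $\widehat M_\pi$ with identical acceptance. The paper's two-step route, on the other hand, isolates the unbounded-horizon reach probability $\ge\alpha$ as an intermediate quantity, which some readers may find conceptually clearer.
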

\begin{proof}
    To prove this lemma, it is sufficient to show that if $\pi$ is not $\alpha$-optimal from $s_0$ in $\M$, then a trajectory in $\widehat \M$ of length $T$ from $s_0$ under $\pi$ visits a state-action pair in $U$ with probability at least $\alpha - \varepsilon$.
    Let $p_\pi$ and $\widehat p_\pi$ be the probability of satisfaction that $\pi$ obtains from $s_0$ in $\M$ and $\widehat \M$, respectively. Let $p$ be the maximum probability of satisfaction in $\M$. We begin by noting that $\widehat p_\pi \ge p$ by the construction of $\widehat \M$. If $\pi$ is not $\alpha$-optimal from $s_0$ in $M$, this means that $p  - p_\pi \ge \alpha$, which implies $\widehat p_\pi - p_\pi \ge \alpha$. As the values $\widehat p_\pi$ and $p_\pi$ only differ due to $\pi$ reaching state-action pairs in $U$ in $\widehat \M$, this means that $\pi$ must reach a state-action pair in $U$ in $\widehat \M$ from $s_0$ with probability at least $\alpha$.

    Finally, note that $T+1$ is an upper bound on the $\varepsilon$-recurrence time in $\widehat \M$. This is because any policy $\pi$ in $\widehat \M$ that takes a state-action pair in $U$ will visit the BSCC formed by the sink after one additional timestep. For reasoning about reaching a state-action pair in $U$ once, this additional timestep due to the sink state has no effect. Thus, if the probability to reach a state-action pair in $U$ in $\widehat \M$ from $s_0$ under $\pi$ is at least $\alpha$, it must be at least $\alpha - \varepsilon$ in $T$ steps.
\end{proof}

We are now able to show the sample complexity of our algorithm. Note that the bound in Theorem~\ref{thm:complexity} on the number of samples $C = \Tilde O(|S|^3 |A| T^3 / \varepsilon^4)$ is bounded by a polynomial in $|S|$, $|A|$, $T$, $\frac{1}{\varepsilon}$, and $\frac{1}{\delta}$. 
\begin{theorem}[Sample Complexity]
    \label{thm:complexity}
    Let $0 < \delta < 1$, $\varepsilon > 0$, $\M = (S, A, P, s_0, F)$ be an MDP, $N = |S|$ denote the size of the state space, $K = |A|$ denote the size of the action, and $T$ be the $\varepsilon$-recurrence time of $\M$. Let $\varepsilon' = \frac{\varepsilon}{NT}$ and $\delta' = \frac{\delta}{NK}$. Let $\pi_i$ be an infinite sequence of policies produced by $\omega$-PAC. For $k = \big\lceil \frac{-\ln(\delta' / 2)}{2 \varepsilon'^2} \big\rceil$, $\pi_i$ is not $9\varepsilon$-optimal for at most $C = T \big\lceil \max\left( \frac{kNK}{\varepsilon}, \frac{kNK - \ln(\delta)}{2\varepsilon^2} \right) \big\rceil$ steps with probability at least $1 - 2\delta$.
\end{theorem}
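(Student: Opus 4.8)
The plan is to count episodes rather than individual steps: since $\pi$ is recomputed only between the length-$T$ sampling phases of the while loop, the policy $\pi_i$ is constant across each block of $T$ consecutive samples, so it suffices to bound the number of \emph{bad episodes}---those in which the policy used is not $9\varepsilon$-optimal in $\M$---and then multiply by $T$. Throughout I would condition on the ``good'' accuracy event $E$ that every state-action pair, once marked known, has empirical transition probabilities within $\frac{\varepsilon}{NT}$ of the true ones and preserves zero-probability edges; by Lemma~\ref{lem:sample} and the union bound over the $NK$ pairs (exactly as in the proof of Theorem~\ref{thm:correct}), $E$ holds with probability at least $(1-\delta')^{NK} \ge 1-\delta$.

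The core of the argument is a per-episode exploration guarantee. For episode $j$, let $\Mhat_j$ be the estimated optimistic MDP, $\pi_j$ its optimal positional policy, and let $\widetilde{\M}_j$ be the \emph{true} optimistic MDP: identical to $\M$ except that the currently unknown pairs $U_j$ are redirected to the accepting sink, but using the true probabilities on known edges. On $E$, $\Mhat_j$ is an $(\frac{\varepsilon}{NT},T)$-approximation of $\widetilde{\M}_j$, so the reasoning of Theorem~\ref{thm:correct} shows $\pi_j$ is $6\varepsilon$-optimal in $\widetilde{\M}_j$, while optimism gives that the optimal value of $\widetilde{\M}_j$ is at least the optimal value $p$ of $\M$. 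Replaying the argument of Lemma~\ref{lem:explore} but weakening the optimism inequality by this $6\varepsilon$ slack, I would show that if $\pi_j$ is not $9\varepsilon$-optimal in $\M$ then $\pi_j$ reaches $U_j$ in $\widetilde{\M}_j$ with probability exceeding $9\varepsilon - 6\varepsilon = 3\varepsilon$, hence exceeding $2\varepsilon$ within $T$ steps (the $T$-vs-unbounded gap is at most $\varepsilon$, as in Lemma~\ref{lem:explore}). A coupling finishes the transfer: until the first visit to a pair in $U_j$, the dynamics of $\widetilde{\M}_j$ and $\M$ coincide, since both use the true probabilities on the non-$U_j$ (i.e.\ known) pairs, so the probability that the length-$T$ trajectory actually sampled in $\M$ visits an unknown pair is likewise greater than $2\varepsilon$.

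With this in hand, the remaining step is a concentration/counting argument. Each visit to a pair in $U_j$ causes at least one increment of a visit count, and the total number of increments over the whole run is at most $kNK$ (each of the $NK$ pairs is incremented at most $k$ times). Indexing the bad episodes and letting $X_t$ indicate that the $t$-th bad episode visits an unknown pair, the previous step gives $\mathbb{E}[X_t \mid \mathcal{F}_{t-1}] > 2\varepsilon$ on $E$, so $\sum_{s \le t}(X_s - 2\varepsilon)$ is a submartingale with bounded increments. Applying the Azuma--Hoeffding lower-tail bound and using that the number of successes can never exceed $kNK$, I would solve the resulting inequality for the number of bad episodes $B$: the solution is dominated by $\lceil \max(\frac{kNK}{\varepsilon}, \frac{kNK - \ln\delta}{2\varepsilon^2}) \rceil$, and this bound holds with probability at least $1-\delta$. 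A union bound with $E$ yields probability at least $1-2\delta$, and multiplying $B$ by the $T$ steps per episode gives the stated $C$.

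The main obstacle is the exploration guarantee, not the bookkeeping. Lemma~\ref{lem:explore} is stated for the true optimistic MDP, whereas the algorithm only ever computes optimal policies for the estimated one; bridging this requires both the $6\varepsilon$ approximation slack (which is precisely why the target degrades from $6\varepsilon$ in Theorem~\ref{thm:correct} to $9\varepsilon$ here) and the coupling that converts a reachability probability in $\widetilde{\M}_j$ into a sampling probability in $\M$. One must also be careful that the $\varepsilon$-recurrence time of the modified MDPs $\widetilde{\M}_j$ is controlled by $T$ (up to the additive $+1$ from the sink noted in Lemma~\ref{lem:explore}), so that Lemma~\ref{lem:approx} and the $T$-step reachability bounds remain applicable. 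The concentration step is routine once the adaptive, non-independent nature of the $X_t$ is handled with a martingale rather than a plain i.i.d.\ Hoeffding bound.
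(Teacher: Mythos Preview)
Your proposal is correct and follows the same skeleton as the paper's proof---condition on the accuracy event, derive a per-episode lower bound on the probability of visiting an unknown pair whenever the current policy is bad, cap the total number of such visits by $kNK$, and apply a concentration inequality---but the intermediate constructions differ. The paper fixes a single $(\frac{\varepsilon}{NT},T)$-approximation $\M'$ of $\M$ (agreeing with $\Mhat$ on all eventually-known pairs), applies Lemma~\ref{lem:explore} \emph{inside $\M'$} with $\alpha=3\varepsilon$ and the $2\varepsilon$-recurrence-time bound from Lemma~\ref{lem:approx} to get a per-episode exploration probability of at least $\varepsilon$, and then compares against i.i.d.\ Bernoulli$(\varepsilon)$ trials via Hoeffding; the $3\varepsilon$-optimality in $\M'$ is translated to $9\varepsilon$-optimality in $\M$ only at the very end. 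You instead work with the per-episode true optimistic MDP $\widetilde{\M}_j$, absorb the $6\varepsilon$ approximation slack up front (which is exactly what produces the $9\varepsilon$), obtain a $2\varepsilon$ exploration probability, make the coupling between $\widetilde{\M}_j$ and $\M$ explicit, and use Azuma--Hoeffding to handle the adaptivity of the episodes. Your route is a bit more careful about where $\pi_j$ is actually optimal and about inter-episode dependence; the paper's route is more compact but leans on an implicit stochastic-domination argument to justify the i.i.d.\ Hoeffding step. Either decomposition lands inside the stated bound $C$.
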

\begin{proof}
    From the proof of Theorem~\ref{thm:correct}, all of the state-action pairs marked as known in $\widehat \M$ are $\frac{\varepsilon}{NT}$-accurate with probability at least $1-\delta$. For ease of presentation, we will assume that this occurs and incorporate its probability at end of the proof.
    
    Let $\M'$ be an $(\frac{\varepsilon}{NT}, T)$-approximation of $\M$ that matches $\widehat \M$ for all of the state-action pairs marked as known at the end of training. By Lemma~\ref{lem:approx}, the $2\varepsilon$-recurrence time in $\M'$ is at most $T$. The maximum number of visits to unknown state-action pairs is $kNK$, since all $NK$ state-action pairs will be marked as known after this. By Lemma~\ref{lem:explore}, if the policy $\pi_i$ is not $3\varepsilon$-optimal in $\M'$, the algorithm will visit an unknown state-action pair with probability at least $\varepsilon$.  Let $m$ be the number of steps that $\pi_i$ is not $3\varepsilon$-optimal in $\M'$ over the course of training. We now show that $\Pr(m \le C) \ge 1 - \delta$. 
    Let $\mathcal{S}$ be the number of successes of a binary random variable that occurs with probability $\varepsilon$ sampled $C/T$ times. Since $kNK \le \varepsilon \frac{C}{T}$, we can apply Hoeffding's inequality to get that
    \begin{align*}
        \Pr(m \le C) &\ge \Pr(\mathcal{S} > kNK) \\
        &\ge 1 - \textstyle\exp(-\frac{2C}{T} (\varepsilon - \frac{kNKT}{C})^2) \\
        &\ge 1 - \delta .
    \end{align*}
    From the proof of Theorem~\ref{thm:correct}, since $\M'$ is an $(\frac{\varepsilon}{NT}, T)$-approximation, $\pi_i$ is $6\varepsilon$-optimal in $\M$. Recalling that we assumed that all state-action pairs in $\widehat \M$ are $\frac{\varepsilon}{NT}$-accurate, which occurs with probability $1-\delta$, we have that the overall probability of producing a $9\varepsilon$-optimal strategy is at least $(1-\delta)(1-\delta) \ge 1 - 2\delta$.
\end{proof}

\section{Experiments}

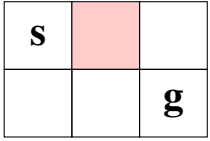
\begin{figure}
    \centering
    \begin{tikzpicture}[transform shape, scale=0.9]
        \csRobReg{0}{1}{s}
        \csRobDan{1}{1}{}
        \csRobReg{2}{1}{}

        \csRobReg{0}{0}{}
        \csRobReg{1}{0}{}
        \csRobReg{2}{0}{g}
    \end{tikzpicture}
    \caption{The \textbf{gridworld example}. The red cell is a trap. The goal is to visit the two states $s$ and $g$ repeatedly.}
    \label{fig:grid}
\end{figure}

We implemented $\omega$-PAC inside of the tool Mungojerrie~\cite{Tacas23MJ}.\footnote{Available at https://plv.colorado.edu/mungojerrie/omega-pac.}
Mungojerrie can compute optimal policies with respect to a parity automaton in MDPs and is written in C++. All experiments were run on a computer with an Intel i7-8750H processor and 16 GB of memory.

\paragraph{Gridworld example.} Figure~\ref{fig:grid} shows a gridworld example. In this example, the agent has four actions, north-east, north-west, south-east, and south-west. For a given direction, the agent moves in one of the corresponding cardinal directions with probability $0.4$, in the other corresponding cardinal direction with probability $0.4$, and does not move with probability $0.2$. If the agent would move into a wall, it does not move. In the trap state denoted in red, the agent becomes stuck and all actions cause the agent to not move. The property is to visit the states $s$ and $g$ infinitely often, which is expressible in LTL as $\varphi = \G\F s \land \G\F g$. We set $\varepsilon = 1/20$, and $\delta = 1/10$. The product contains $|S| = 12$ states, $|A| = 4$ actions, and has a $\varepsilon$-recurrence time of $T = 19$. Our implementation of the $\omega$-PAC algorithm takes approximately $40$ minutes to terminate on this example, under the parameter selection for $k$ suggested by Theorem~\ref{thm:correct}. We did not observe a run where the resulting policy produced was not optimal under this parameter selection, suggesting that the $k$ in Theorem~\ref{thm:correct} may be needlessly large in practice.

\paragraph{Chain example.} To investigate the effect of different values of $k$ on the performance of $\omega$-PAC, we examined a simple MDP consisting of a chain of states with two actions: one action continues, and the other goes to an accepting sink with probability $1/2^s$ for the $s^{th}$ state. In this example, $|S| = 8$, $|A| = 2$, $T  = 8$, $\varepsilon = 1/60$, and $\delta = 1/10$. Figure~\ref{fig:chain-plot} shows the distribution of probabilities of satisfaction of the policies produced by $\omega$-PAC for $20$ runs under different $k$, up to the $k$ used in Theorem~\ref{thm:correct}. We see that in practice, a small $k$ typically suffices, and that results of this example are in line with Theorem~\ref{thm:correct}.

\begin{figure}
    \centering
    \includegraphics[scale=0.5]{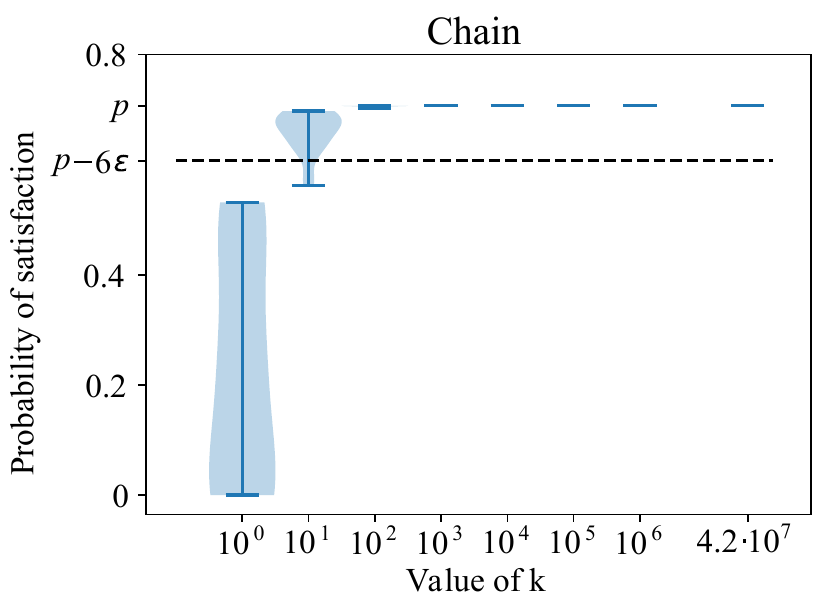}
    \caption{The distribution of the probability of satisfaction of learned policies for different values of $k$ for the \textbf{chain example}. We also show the optimal probability of satisfaction $p$ and the threshold for $6\varepsilon$-optimality.}
    \label{fig:chain-plot}
\end{figure}

\section{Conclusion}
We introduced $\omega$-PAC, a PAC learning algorithm for LTL and $\omega$-regular objectives in MDPs. For this algorithm, we introduced the notion of the $\varepsilon$-recurrence time. Intuitively, the $\varepsilon$-recurrence time measures the time it takes for finite trajectories to match the recurrent behavior of infinite trajectories with high probability. We proved that the $\omega$-PAC algorithm has a sample complexity that is polynomial in the relevant parameters, the size of the state space $|S|$, the size of the action space $|A|$, the $\varepsilon$-recurrence time $T$, $\frac{1}{\varepsilon}$, and $\frac{1}{\delta}$. Finally, we performed experiments with $\omega$-PAC that suggest that the bounds of our theory can be tightened as part of future work.

\section*{Acknowledgements}
We thank Mahmoud Salamati and Suguman Bansal for valuable discussions.
This work was supported in part by the NSF through grant CCF-2009022 and the NSF CAREER award CCF-2146563.

\bibliography{papers}

\newpage

\appendix

\section{Detailed Algorithm}

\begin{algorithm}
\caption{$\omega$-PAC}
\textbf{Input}: $|S|$, $|A|$, $T$, $\frac{1}{\varepsilon}$, $\frac{1}{\delta}$, and threshold $k > 0$ \\
\textbf{Output}: $6\varepsilon$-optimal policy $\pi$
\begin{algorithmic}[1]
\STATE Initialize all visit counts $c(s,a,s') \xleftarrow{} 0$
\STATE Let $c(s,a) = \sum_{s' \in S} c(s,a,s')$
\WHILE{true}
\STATE \textcolor{blue}{$\triangleright$ \textbf{Update $\Mhat$ with threshold $k$}}
\FOR{$(s,a)$ in $S \times A$}
\IF{$c(s,a) < k$}
\STATE $\widehat P(s,a,s') \xleftarrow{} \mathds{1}(s' = \mathtt{sink})$
\ELSE
\STATE $\widehat P(s,a,s') \xleftarrow{} \frac{c(s,a,s')}{c(s,a)}$
\STATE $\widehat F(s) \xleftarrow{} F(s)$
\ENDIF
\ENDFOR
\STATE $\widehat P(\mathtt{sink},\cdot,s') \xleftarrow{} \mathds{1}(s' = \mathtt{sink})$
\STATE $\widehat F(\mathtt{sink}) \xleftarrow{} \texttt{accepting}$
\STATE $\Mhat \xleftarrow{} (S\cup\{\mathtt{sink}\}, A, \widehat P, s_0, \widehat F)$
\STATE \textcolor{blue}{$\triangleright$ \textbf{Update $\pi$}}
\STATE $\pi \xleftarrow{} \text{optimal positional policy in $\Mhat$}$
\STATE \textcolor{blue}{$\triangleright$ \textbf{Check if done}}
\STATE $S_T \xleftarrow{}$ states reachable with positive probability \\ \qquad in $T$ steps from $s_0$ under $\pi$ in $\Mhat$
\IF{$c(s,a) = k$ for all $(s, a)$ in $S_T \times A$}
\STATE \textbf{return} $\pi$
\ENDIF
\STATE \textcolor{blue}{$\triangleright$ \textbf{Sample}}
\STATE Sample trajectory $\tau \sim \{(s_0, a_0), \ldots, (s_{T-1}, a_{T-1}) \}$ with $\pi$ in $\M$
\FOR{$i = 0, 1, \ldots, T-1$}
\IF{$c(s_i,a_i) < k$}
\STATE $c(s_i,a_i,s_{i+1}) \xleftarrow{} c(s_i,a_i,s_{i+1}) + 1$
\ENDIF
\ENDFOR
\ENDWHILE
\end{algorithmic}
\end{algorithm}

\end{document}